\documentclass[11pt]{article}

\usepackage[preprint]{acl}
\usepackage{amssymb}
\usepackage{amsthm}
\newtheorem{proposition}{Proposition}
\usepackage{times}
\usepackage{latexsym}
\usepackage[most]{tcolorbox}
\usepackage{algorithm}
\usepackage{algpseudocode}
\usepackage{booktabs}
\usepackage{placeins}
\usepackage[table]{xcolor}
\definecolor{ourspink}{RGB}{252,228,236}
\definecolor{compspink}{RGB}{253,241,245}
\definecolor{modelband}{RGB}{237,233,247}
\usepackage[T1]{fontenc}

\usepackage{microtype}

\usepackage{inconsolata}

\usepackage{graphicx}

\usepackage{amsmath}

\usepackage[most]{tcolorbox}
\usepackage{xcolor}

\definecolor{rqblue}{HTML}{2563EB}
\definecolor{rqpurple}{HTML}{7C3AED}
\definecolor{rqgreen}{HTML}{059669}

\newtcolorbox{takeawaybox}[1]{
    colback=blue!5,
    colframe=blue,
    boxrule=0.8pt,
    arc=2mm,
    left=2.5mm,
    right=2.5mm,
    top=1mm,
    bottom=1.8mm,
    before skip=5pt,
    after skip=5pt,
    fonttitle=\bfseries,
    title=#1,
    coltitle=blue,
    colbacktitle=blue!5,
    attach title to upper,
    after title={\par\vspace{0.5mm}}
}

\title{Different Tasks, Different Timesteps: Temporal Task-Aware Quantization for Diffusion Language Models}

\author{
  Ofek Avivi\textsuperscript{1,2,*} \quad
  Eliron Rahimi\textsuperscript{1,2,*} \quad
  Avi Mendelson\textsuperscript{1} \quad
  Chaim Baskin\textsuperscript{2}
\\[4pt]
  \normalfont\normalsize \textsuperscript{1}Technion -- Israel Institute of Technology
\\
  \normalfont\normalsize \textsuperscript{2}INSIGHT Lab, Ben-Gurion University of the Negev
\\[2pt]
  \normalfont\small \textsuperscript{*}Equal contribution
  \\ ~\href{https://ofekavivi.github.io/T2Quant/}{Project Page \includegraphics[height=1em]{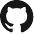}}
}

\usepackage{xspace}
\newcommand{\method}{T$^2$Quant\xspace}
\begin{document}
\maketitle
\begin{abstract}
Diffusion Language Models (DLMs) introduce unique quantization challenges due to their iterative denoising process. In this work, we reveal that quantization sensitivity is both task- and time-dependent, with different downstream tasks exhibiting distinct precision requirements across denoising steps. We further show that these patterns can be predicted without ground-truth labels or task-specific evaluation metrics. Building on these findings, we introduce \textbf{\emph{\method{}}}, a task and time-aware mixed-precision quantization framework that uses a label-free sensitivity signal to optimally allocate activation precision across denoising steps under a fixed bit-width budget. Extensive experiments across three DLMs and six benchmarks demonstrate that \textbf{\emph{\method{}}} consistently improves the average accuracy of existing quantization methods, achieving gains of up to 6.7 percentage points under the same budget.

\end{abstract}

\section{Introduction}

Diffusion Language Models (DLMs) have emerged as a compelling alternative to autoregressive (AR) language models, generating text through iterative denoising with bidirectional context and parallel token prediction \citep{nie-etal-2025-llada,ye-etal-2025-dream}. However, iterative denoising requires repeated model evaluations, making efficient inference increasingly important.

Quantization is a widely adopted technique for reducing the memory footprint and computational cost of large language models. However, methods developed for AR models may not directly transfer to DLMs due to their distinct step-wise decoding dynamics. Recent studies have identified DLM-specific quantization challenges, including activation outliers, timestep-dependent distributions, and accumulated quantization errors \citep{lin-etal-2026-quantmeetsdllms,xu-etal-2025-dllmquant,yan-etal-2026-starquant,huang-etal-2026-faircalib}. While these methods improve quantization robustness, they generally maintain a fixed activation precision throughout the denoising process. 

\begin{figure}[t]
\centering
\includegraphics[width=\linewidth]{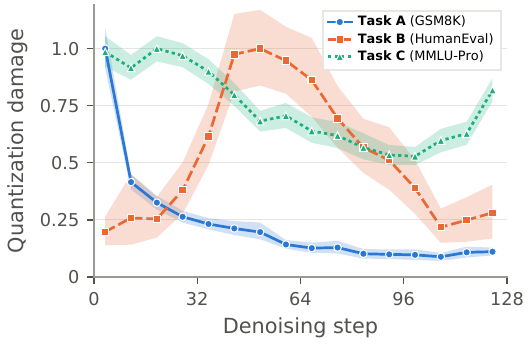}
\caption{\textbf{Task-dependent temporal sensitivity to activation quantization.}
Each curve shows, per eight-step denoising window, the KL divergence
$D_{\mathrm{KL}}(p_{\mathrm{fp}} \,\|\, p_{\mathrm{q}})$ summed over the positions committed
in that window that the task's scoring function reads.}
\label{fig:temporal-sensitivity}
\end{figure}

\begin{figure*}[t]
    \centering
    \includegraphics[width=\linewidth]{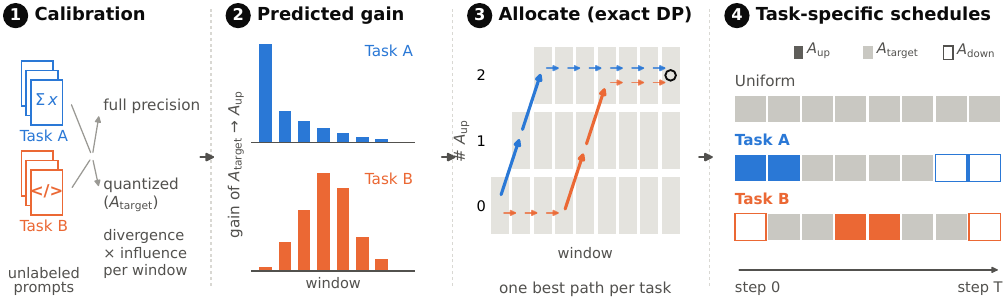}
    \caption{\textbf{\method} overview. (1) A few unlabeled prompts per task are profiled with the full-precision and quantized models. (2) Divergence $\times$ influence gives the predicted gain of upscaling each denoising window. (3) An exact dynamic program selects the windows to upscale under the bit budget. (4) The result is a task-specific schedule at the same average precision as uniform $A_{\mathrm{target}}$.}
    \label{fig:method-overview}
\end{figure*}

Meanwhile, temporal mixed-precision allocation has been explored in image diffusion models \citep{lewandowski-etal-2026-tmpdiff}, and recent studies on DLMs have highlighted how step-wise dynamics, intermediate predictions, and token commitment decisions influence generation behavior \citep{zhang2026differencestextgenerateddiffusion,rahimi2026stepwiserefusaldynamicsautoregressive, shlomi2026spotenhancingspeculativedecoding}. Nevertheless, how quantization sensitivity varies jointly across downstream tasks and denoising steps in DLMs, and how these variations can guide effective mixed-precision allocation, remains largely unexplored.

To address this gap, we systematically investigate the sensitivity of DLMs to activation quantization across different tasks and denoising steps. Our analysis reveals that \emph{precision importance is both task-dependent and time-dependent~(Figure~\ref{fig:temporal-sensitivity})}: different tasks exhibit distinct temporal sensitivity patterns, with some benefiting more from higher precision during early denoising steps and others exhibiting greater sensitivity at intermediate stages. Furthermore, we find that the benefit of additional precision can be predicted without ground-truth labels or downstream metric evaluations. Specifically, we develop a label-free sensitivity signal that combines quantization-induced divergence with the downstream influence of each denoising stage, enabling the identification of task-specific precision requirements using only unlabeled calibration prompts.

Building on these findings, we introduce \textbf{\method} (\textbf{Task and Time-Aware Quantization}), a mixed-precision framework that allocates activation precision across denoising steps according to task-specific sensitivity patterns~(Figure~\ref{fig:method-overview}). \method leverages the proposed sensitivity signal to determine an optimal precision schedule under a fixed average bit-width budget through budget-constrained dynamic programming. Our framework is complementary to existing post-training quantization methods and can be integrated with different quantization approaches without modifying their underlying quantizers. Extensive experiments across three DLMs and six benchmarks demonstrate consistent improvements in average accuracy under a fixed budget, outperforming the strongest evaluated quantized baselines by up to  6.7 percentage points. Our main contributions are summarized as follows:

\begin{itemize}
\item \textbf{Task and time-dependent precision sensitivity.} We reveal distinct temporal precision sensitivity patterns across downstream tasks in DLMs.

    \item \textbf{Label-free sensitivity prediction.} We predict precision importance using internal model signals, without ground-truth labels or task-specific evaluation metrics.

    \item \textbf{Task and time-aware mixed-precision quantization.} We introduce \method, which optimally allocates activation precision under a fixed bit budget and improves existing quantization methods.
\end{itemize}

\section{Related Work}

\paragraph{DLM Quantization.}
DLMs such as LLaDA and Dream generate text by iterative masked denoising with bidirectional context rather than left-to-right decoding \citep{nie-etal-2025-llada, ye-etal-2025-dream}, which raises quantization challenges absent from autoregressive LLMs. DLLMQuant introduces diffusion-aware calibration and quantization \citep{xu-etal-2025-dllmquant}; STaR-Quant uses state-guided transformations and temporal compensation \citep{yan-etal-2026-starquant}; FAIR-Calib targets fragile denoising states \citep{huang-etal-2026-faircalib}; and Quant-dLLM explores sensitivity-based mixed precision \citep{zhang-etal-2026-quantdllm}. These methods show that diffusion state and temporal error propagation matter, yet all keep a fixed activation bit-width across the denoising trajectory. \emph{We instead treat activation precision as a resource allocated over denoising steps under a fixed average bit budget}.

\paragraph{Task-Aware Quantization.}
Task-Aware Quantization (TAQ) estimates layer importance from a small task-specific calibration set and assigns higher weight precision to task-relevant layers under a fixed bit budget \citep{levi-etal-2026-taq}, and Task-Circuit Quantization (TaCQ) preserves task-relevant weight circuits, identified from quantization-induced weight changes and gradient-based impact estimates, at higher precision \citep{xiao-etal-2025-tacq}. Both show that the best allocation of a quantization budget depends on the task, but they allocate precision \emph{spatially}, across layers or weights of autoregressive LLMs. \emph{In contrast, \method{} explores temporal task dependence, allocating activation precision across denoising steps according to task-specific sensitivity.}

\paragraph{Temporal Precision Scheduling.}
Temporal variation is a recognized factor in \emph{image diffusion} quantization: Q-Diffusion uses timestep-aware calibration for shifting activation distributions, and correction methods compensate for errors accumulated across denoising steps \citep{li-etal-2023-qdiffusion}. Most closely related, TMPDiff assigns different precisions to different denoising timesteps in \emph{image diffusion models}, modeling the final latent deviation as an approximately additive sum of per-timestep errors and searching over schedules built from single-timestep sensitivities \citep{lewandowski-etal-2026-tmpdiff}. \emph{DLMs differ qualitatively: generation consists of discrete, often irreversible token commitments, latent reconstruction error does not directly reflect downstream correctness}, and a timestep's importance depends on which task-relevant tokens it commits. Rather than a single global schedule, \emph{\method allocates activation precision per task}, linking where quantization disrupts task-relevant generation to where additional precision should be spent.

\section{Task-Dependent Precision Sensitivity Across Diffusion Steps}
\label{sec:analysis}
The iterative generation process of DLMs raises a fundamental question for quantization: \emph{when does numerical precision matter most?} Each denoising step operates on a different partially generated state, with token commitments potentially influencing subsequent predictions \citep{nie-etal-2025-llada,ye-etal-2025-dream}. Consequently, the impact of quantization may depend not only on the denoising step but also on the downstream task. Understanding these variations, and whether they can be predicted, motivates three research questions:

\begin{itemize}
\item \textbf{RQ1:} Where does quantization sensitivity occur in time?
\item \textbf{RQ2:} Is temporal sensitivity task-dependent?
\item \textbf{RQ3:} Can we predict where additional precision is most valuable?
\end{itemize}

We address RQ1 and RQ2 through temporal divergence analysis (Section~\ref{sec:analysis-diagnostic}), and RQ3 through a label-free precision-allocation proxy (Section~\ref{sec:analysis-signal}).

\paragraph{Experimental setup.}
We evaluate LLaDA-8B-Instruct \citep{nie-etal-2025-llada} on three tasks with distinct output structures: GSM8K \citep{cobbe-etal-2021-gsm8k}, HumanEval \citep{chen-etal-2021-humaneval}, and MMLU-Pro \citep{wang-etal-2024-mmlupro}. We use 512-token generations over 128 denoising steps with a single 512-token block, partitioning the trajectory into 16 non-overlapping windows of eight steps. We use the standard low-confidence remasking sampler of \citet{nie-etal-2025-llada} with temperature 0 and no classifier-free guidance. The quantized model uses QuaRot-style rotations \citep{ashkboos-etal-2024-quarot}, 4-bit GPTQ weights \citep{frantar-etal-2023-gptq}, and dynamic per-token 4-bit activations. The target precision is $\mathrm{A}_{\mathrm{target}} = 4$ bits and the upscaled precision is $\mathrm{A}_{\mathrm{up}} = 8$ bits. Further details are given in Appendix~\ref{app:experimental_setup}.

\subsection{Temporal Sensitivity Diagnostic}
\label{sec:analysis-diagnostic}

\paragraph{Temporal sensitivity.}
Figure~\ref{fig:temporal-sensitivity} shows that quantization sensitivity is highly non-uniform across
the denoising trajectory. A uniform distribution would assign each of the 16 eight-step windows 6.25\%
of the task-relevant divergence. Instead, on GSM8K the first window alone carries 27\% (4.4$\times$
uniform) and the first quarter 55\%, while on HumanEval five mid-trajectory windows (steps 32--71) carry
52\%. Even MMLU-Pro, the most evenly spread task, varies by nearly 2$\times$ between its most and least
sensitive windows. This concentration suggests that activation precision need not be allocated uniformly over time and motivates selectively increasing precision at the most sensitive stages, as explored for image diffusion models \citep{lewandowski-etal-2026-tmpdiff}.
\begin{takeawaybox}
\textbf{\color{blue!55!black}Key Takeaway} Quantization sensitivity varies across denoising steps.
\end{takeawaybox}

\paragraph{Task-dependent sensitivity.}
The location and spread of these sensitive regions vary substantially across tasks. GSM8K is strongly
front-loaded, with its peak in the first 8 denoising steps and a rapid decay thereafter. MMLU-Pro is
also early, but broad: its first four windows (steps 0--31) are all within 10\% of its peak, after which
sensitivity declines gradually. In contrast, HumanEval is insensitive early and exhibits a broad
mid-trajectory peak, roughly around steps 40--63. Thus, a single task-independent temporal precision
schedule is unlikely to suit all tasks. Whether these sensitivity profiles can predict where additional
precision actually improves downstream performance is examined in Section~\ref{sec:analysis-signal}.
\begin{takeawaybox}
\textbf{\color{blue!55!black}Key Takeaway} Sensitivity patterns differ across tasks, challenging
fixed temporal precision schedules.
\end{takeawaybox}

\begin{figure*}[t]
    \centering
    \includegraphics[width=\linewidth]{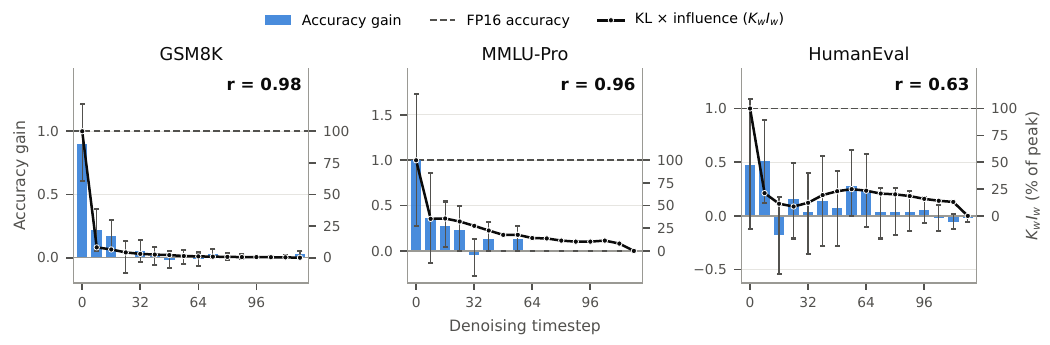}
    \caption{\textbf{KL $\times$ influence predicts where additional precision pays off.} Bars show the accuracy gain from upscaling a single denoising window from $\mathrm{A}_{\mathrm{target}}$ to $\mathrm{A}_{\mathrm{up}}$, normalized by the full-precision gap (dashed line, 1.0 = full-precision accuracy), with 95\% bootstrap intervals. The black curve shows the label-free proxy $K_w I_w$, normalized to its peak window. $r$ is the Pearson correlation between the two across windows.}
    \label{fig:proxy-correlation}
\end{figure*}

\subsection{A Label-Free Signal for Precision Allocation}
\label{sec:analysis-signal}

We now address RQ3: identifying, without downstream labels, which denoising windows benefit most from additional activation precision.

\paragraph{Denoising-window upscaling.}
Let $\mathrm{A}_{\mathrm{target}}$ and $\mathrm{A}_{\mathrm{up}}$ denote the target and higher activation precisions. We upscale each window $w$ individually, keeping the remaining steps at $\mathrm{A}_{\mathrm{target}}$ and weights fixed. The resulting gain is $\Delta\mathcal{M}_w = \mathcal{M}_{\mathrm{A}_{\mathrm{up}}(w)} - \mathcal{M}_{\mathrm{A}_{\mathrm{target}}}$, where $\mathcal{M}$ is the task metric.

\paragraph{A label-free proxy.}
Local divergence alone cannot capture a window's importance: its commitments must also influence subsequent denoising \citep{huang-etal-2026-faircalib, xu-etal-2025-dllmquant}. Along the uniform $\mathrm{A}_{\mathrm{target}}$ trajectory,
we measure full-precision divergence at committed positions $C_t$:
\begin{equation}
\label{eq:kl}
K_w = \sum_{t \in w}\sum_{j \in C_t} D_{\mathrm{KL}}\!\left(p_{\mathrm{fp}}(\cdot \mid x_t)_j \,\Vert\, p_{\mathrm{A}_{\mathrm{target}}}(\cdot \mid x_t)_j\right).
\end{equation}
$K_w$ measures how much additional precision can change the window's decisions. To measure how consequential those decisions are, let $x_{\mathrm{before}}$ and $x_{\mathrm{after}}$ be the states immediately before and after window $w$, and $F_w$ the positions still masked afterwards. The window's downstream influence is the change in the full-precision model's beliefs about these positions:
\begin{equation}
\label{eq:analysis-influence}
I_w = \sum_{j \in F_w} D_{\mathrm{KL}}\!\left(p_{\mathrm{fp}}(\cdot \mid x_{\mathrm{after}})_j \,\Vert\, p_{\mathrm{fp}}(\cdot \mid x_{\mathrm{before}})_j\right).
\end{equation}
Both are computed from the $\mathrm{A}_{\mathrm{target}}$ trajectory alone, without labels.

\begin{proposition}
\label{prop:bound}
For an evaluation example, let $\Pi_{\mathrm{up}}$ and $\Pi_{\mathrm{tg}}$ be the distributions of window $w$'s commitments (token values
and positions) when the window runs at $\mathrm{A}_{\mathrm{up}}$ and at $\mathrm{A}_{\mathrm{target}}$ from the same state, let
$J_w = D_{\mathrm{KL}}(\Pi_{\mathrm{up}} \Vert \Pi_{\mathrm{tg}})$, and let $R_w$ be the range of the expected final score over the
window's possible commitments. Then
\[
\left|\Delta \mathcal{M}_w\right|^{2} \le \tfrac{1}{2}\, \mathbb{E}[J_w]\, \mathbb{E}[R_w^{2}],
\]
with expectations over evaluation examples.
\end{proposition}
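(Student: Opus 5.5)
The plan is to bound the effect of upscaling one example at a time, and then average with Cauchy--Schwarz. Fix an evaluation example with state $x$ entering window $w$. Since only window $w$ changes precision, the upscaled and target runs agree up to window $w$, and both run at $\mathrm{A}_{\mathrm{target}}$ afterwards. Let $c$ denote the window's commitments (token values and positions), and let $g(c)$ be the expected final score conditional on $c$, taken over the later randomness from window $w$ onward; with temperature 0 this is deterministic, but the argument does not need that. Because everything after the window is identical in the two runs given $c$, the per-example gain is
\[
\delta = \mathbb{E}_{\Pi_{\mathrm{up}}}[g] - \mathbb{E}_{\Pi_{\mathrm{tg}}}[g].
\]
Averaging over examples gives $\Delta\mathcal{M}_w = \mathbb{E}[\delta]$.

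For the per-example bound, note that subtracting the constant $\tfrac12(\max g+\min g)$ leaves $\delta$ unchanged. The centered function then satisfies $\|g-m\|_\infty \le R_w/2$, where $R_w=\max_c g(c)-\min_c g(c)$. This gives
\[
|\delta| \le 2\,\mathrm{TV}(\Pi_{\mathrm{up}},\Pi_{\mathrm{tg}})\cdot \tfrac{R_w}{2} = R_w\,\mathrm{TV}.
\]
Pinsker's inequality gives $\mathrm{TV}\le\sqrt{J_w/2}$, so $|\delta|\le R_w\sqrt{J_w/2}$.

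For the average over examples, I would use Jensen and then Cauchy--Schwarz:
\[
|\Delta\mathcal{M}_w| \le \mathbb{E}\big[R_w\sqrt{J_w/2}\big] \le \sqrt{\mathbb{E}[R_w^2]}\,\sqrt{\mathbb{E}[J_w]/2}.
\]
Squaring this yields the claimed inequality.

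The main point needing care is the first step, the identification of $\delta$ as a difference of expectations of one common function $g$ under two commitment laws. It needs two facts. First, the state entering window $w$ is the same in both runs, which holds because earlier steps use identical precision and identical inputs. Second, the conditional law of the trajectory after the window, given $c$, does not depend on the window's precision. This requires that the post-window state is a function of $x$ and $c$ alone; that holds because the committed positions and tokens fully determine the next masked state. I would state this Markov property explicitly. I would also remark that $\Pi_{\mathrm{up}}\ll\Pi_{\mathrm{tg}}$ is implicitly assumed for $J_w$ to be finite; otherwise the bound holds trivially.
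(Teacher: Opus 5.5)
Your proposal is correct and follows the paper's proof essentially step for step. Both arguments go through the per-example bound $|\Delta m_w|\le R_w\,\mathrm{TV}(\Pi_{\mathrm{up}},\Pi_{\mathrm{tg}})$ via the variational characterization of total variation (your centering argument just makes this explicit), then Pinsker, then $|\mathbb{E}[\cdot]|\le\mathbb{E}|\cdot|$ and Cauchy--Schwarz over examples; your explicit common-$g$ and Markov justification matches the paper's definition $g_w(c)=\mathbb{E}[m(Y)\mid s_w,c]$, and your absolute-continuity remark matches its restriction to finite $J_w$ and its observation that the bound is vacuous under greedy decoding.
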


The bound (proof in Appendix~\ref{app:bound-proof}) separates the effect of upscaling into commitment perturbation ($J_w$) and downstream score sensitivity ($R_w$). Since neither is available from a single unlabeled trajectory, we use $K_w$ and $I_w$ as respective surrogates. Mirroring the bound, we rank windows by
\[
S_w = \mathbb{E}[K_w]\, \mathbb{E}[I_w].
\]
This is a motivated proxy, not a formal bound; we evaluate its predictive ability empirically.

\paragraph{Results.}
Figure~\ref{fig:proxy-correlation} compares $S_w$
with the measured gain $\Delta\mathcal{M}_w$.
On GSM8K and MMLU-Pro, the proxy correctly identifies
the most beneficial window, recovering 90\% and 100\%
of the full-precision gap, respectively.
The proxy closely tracks window-level gains
($r=0.98$ and $0.96$), including beyond the
highest-ranked window. On HumanEval, both the proxy
and observed gains concentrate in the first two windows, although measurement noise limits correlation ($r=0.63$). Influence is essential: divergence alone incorrectly favors HumanEval's final window, whose commitments cannot affect future tokens. Finally, just 16 unlabeled calibration prompts yield proxy estimates closely matching those computed from all prompts ($r\geq 0.96$ across tasks; Figure~\ref{fig:calib-size}).

\begin{takeawaybox}
\textbf{\color{blue!55!black}Key Takeaway} Precision importance can be predicted from internal model signals using only a few unlabeled calibration prompts, without requiring task-specific evaluation metrics.
\end{takeawaybox}


\section{Method}
\label{sec:method}
Section~\ref{sec:analysis-diagnostic} establishes that activation-quantization sensitivity varies across denoising steps and downstream tasks. Section~\ref{sec:analysis-signal} further shows that a label-free signal combining quantization divergence and downstream influence can identify windows where additional precision is most beneficial. Building on these findings, we introduce \textbf{\method}, which extends single-window sensitivity analysis to a task-specific activation-precision schedule under a fixed budget. Rather than designing a new quantizer, \method uses unlabeled task-specific prompts to estimate the value of each available precision and optimizes their allocation across the denoising trajectory.

\subsection{Problem Setup}
\label{sec:method-setup}
\paragraph{Notation and objective.}
Let $\tau$ denote a target task and $\mathcal D_\tau=\{s_i\}_{i=1}^{N}$ a representative calibration set of prompts. A diffusion language model generates through $T$ denoising steps, with partially generated state $x_{i,t}$ immediately before step $t$. Let $C_{i,t}$ be the positions committed at that step. Following the window-based analysis of Section~\ref{sec:analysis-diagnostic}, we partition normalized denoising progress into $K$ ordered, non-overlapping temporal windows $\{\mathcal W_k\}_{k=1}^{K}$, so a schedule can be applied across different step counts. Given a finite set of supported activation precisions $\mathcal B$ and a uniform reference precision $b_0\in\mathcal B$, our task-specific schedule is $\mathbf b_\tau=(b_1,\ldots,b_K)\in\mathcal B^K$, where all denoising steps in $\mathcal W_k$ use $b_k$. Denote the additive resource cost of assigning precision $b$ to window $k$ by $c_k(b)$ (e.g., the window's activation work multiplied by $b$); the reference budget is $C_0=\sum_k c_k(b_0)$. Ideally, the schedule maximizes the downstream task metric $\mathcal M_\tau(\mathbf b)$ subject to $\sum_k c_k(b_k)=C_0$. Because evaluating this objective for candidate schedules would require labeled, closed-loop generations, we instead construct a label-free surrogate.

\paragraph{Calibration assumption.}
As in task-aware PTQ~\citep{levi-etal-2026-taq,xiao-etal-2025-tacq}, we assume access to representative unlabeled calibration prompts from the target task. Following the label-free analysis of Section~\ref{sec:analysis}, \method{} estimates precision importance using internal model signals, without ground-truth answers or task-specific evaluation metrics.

\paragraph{Quantizer-agnostic scheduling.}
\method{} allocates activation precision across denoising steps without modifying the underlying quantizer or sampler. It can therefore be integrated with existing PTQ methods, including DLLMQuant~\citep{xu-etal-2025-dllmquant}, FAIR-Calib~\citep{huang-etal-2026-faircalib}, and STaR-Quant~\citep{yan-etal-2026-starquant}.

\subsection{Label-Free Calibration}
\label{sec:method-calibration}
\paragraph{Local quantization divergence.}
Following the committed-position KL signal in Eq.~\eqref{eq:kl}, we extend the analysis to every candidate activation precision $b\in\mathcal B$. For each $s_i\in\mathcal D_\tau$, we generate a full-precision reference trajectory and save its states $x_{i,t}$ and committed sets $C_{i,t}$. We replay each saved state under every $b$, keeping the input state fixed across precisions. Let $p^{\mathrm{fp}}_{i,t,j}$ and $p^{(b)}_{i,t,j}$ denote the resulting full-precision and quantized predictive distributions at position $j$. The average divergence over all positions committed in window $k$ is
\begin{equation}
\label{eq:local-kl}
\kappa_{\tau,k}(b)=
\frac{\displaystyle\sum_i\sum_{t\in\mathcal W_k}
\sum_{j\in C_{i,t}}
D_{\mathrm{KL}}\!\left(p^{\mathrm{fp}}_{i,t,j}\Vert p^{(b)}_{i,t,j}\right)}
{\displaystyle\sum_i\sum_{t\in\mathcal W_k}|C_{i,t}|}.
\end{equation}
This extends the single-precision divergence in Section~\ref{sec:analysis-signal} to multiple precision levels, normalized per committed position. For scheduling, the replay states come from the full-precision reference trajectory, whereas Section~\ref{sec:analysis-signal} evaluates its diagnostic on the uniform-precision trajectory. Holding each replay state fixed isolates local quantization effects from changes in the generated trajectory; no task-specific token selection is required.

\paragraph{Downstream influence.}
As demonstrated in Section~\ref{sec:analysis-signal}, local divergence alone does not capture how a window's commitments affect subsequent predictions. We therefore reuse the downstream-influence measure of Eq.~\eqref{eq:analysis-influence}, evaluated on the saved reference trajectory. For each prompt $i$ and window $k$, let $x^-_{i,k}$ and $x^+_{i,k}$ be the states immediately before and after the window, and let $M_{i,k}$ contain the positions still masked afterwards. Applying Eq.~\eqref{eq:analysis-influence} to these states gives $\iota_{i,k}$; we average across prompts as $\bar\iota_{\tau,k}=N^{-1}\sum_i\iota_{i,k}$. This measures the window's downstream leverage using internal model predictions, without requiring task labels.
\definecolor{ourspink}{RGB}{252,228,236} \definecolor{modelband}{RGB}{237,233,247} \definecolor{avgcol}{RGB}{253,237,225}

\providecommand{\method}{T$^2$Quant}
\begin{table*}[t]
\centering\small\setlength{\tabcolsep}{4.5pt}
\caption{W4A4 accuracy (\%) of three diffusion-LM quantization methods, each with and without \method{}, on three masked-diffusion LMs and six benchmarks at the same 4.00 average activation bits (bold: best, underline: second-best quantized result).}
\label{tab:main_results}
\begin{tabular}{lccccccc}
\toprule
\textbf{Method} & \cellcolor{avgcol}\textbf{Avg.} & \multicolumn{6}{c}{\textbf{Evaluation Benchmarks} $\uparrow$}\\
\cmidrule(lr){3-8}
 & \cellcolor{avgcol} & \textbf{GSM8K} & \textbf{Hum.} & \textbf{ARC-C} & \textbf{MMLU-Pro} & \textbf{DROP} & \textbf{BBH}\\
\midrule
\rowcolor{modelband}\multicolumn{8}{l}{\textit{\textbf{Dream-7B-Instruct}}}\\
FP16 & \cellcolor{avgcol}44.05 & 62.50 & 19.00 & 87.30 & 29.69 & 37.89 & 27.93\\
RTN & \cellcolor{avgcol}0.00 & 0.00 & 0.00 & 0.00 & 0.00 & 0.00 & 0.00\\
DLLMQuant & \cellcolor{avgcol}33.99 & 35.55 & 9.00 & 79.30 & 24.80 & 38.09 & 17.19\\
FAIR-Calib & \cellcolor{avgcol}34.81 & 44.92 & 4.00 & 81.64 & 20.70 & 41.60 & 16.02\\
STaR-Quant & \cellcolor{avgcol}38.55 & 46.09 & 11.00 & 83.01 & 26.17 & \textbf{46.68} & 18.36\\
Random schedule & \cellcolor{avgcol}19.61 & 15.82 & 4.00 & 54.30 & 13.67 & 20.31 & 9.57\\
\rowcolor{ourspink}\textbf{DLLMQuant + \method{} (Ours)} & \cellcolor{avgcol}\underline{40.69} & 53.52 & \underline{17.00} & \underline{84.18} & \textbf{29.69} & 40.43 & 19.34\\
\rowcolor{ourspink}\textbf{FAIR-Calib + \method{} (Ours)} & \cellcolor{avgcol}40.61 & \underline{57.03} & 12.00 & 82.81 & \underline{26.95} & 39.45 & \textbf{25.39}\\
\rowcolor{ourspink}\textbf{STaR-Quant + \method{} (Ours)} & \cellcolor{avgcol}\textbf{42.10} & \textbf{58.79} & \textbf{20.00} & \textbf{84.77} & 24.02 & \underline{42.77} & \underline{22.27}\\
\midrule
\rowcolor{modelband}\multicolumn{8}{l}{\textit{\textbf{LLaDA-1.5-8B}}}\\
FP16 & \cellcolor{avgcol}47.73 & 57.03 & 26.00 & 82.62 & 37.30 & 40.43 & 42.97\\
RTN & \cellcolor{avgcol}0.13 & 0.00 & 0.00 & 0.59 & 0.00 & 0.20 & 0.00\\
DLLMQuant & \cellcolor{avgcol}39.09 & 35.94 & 15.00 & 78.52 & 32.23 & 34.96 & 37.89\\
FAIR-Calib & \cellcolor{avgcol}42.88 & 47.07 & \underline{18.00} & 80.86 & \textbf{35.16} & 36.33 & 39.84\\
STaR-Quant & \cellcolor{avgcol}44.09 & 48.63 & \textbf{20.00} & 79.49 & \underline{34.96} & 39.26 & \textbf{42.19}\\
Random schedule & \cellcolor{avgcol}39.11 & 31.64 & \textbf{20.00} & 76.56 & 30.86 & 35.55 & 40.04\\
\rowcolor{ourspink}\textbf{DLLMQuant + \method{} (Ours)} & \cellcolor{avgcol}42.47 & 45.51 & 15.00 & \textbf{83.79} & 33.59 & 37.11 & 39.84\\
\rowcolor{ourspink}\textbf{FAIR-Calib + \method{} (Ours)} & \cellcolor{avgcol}\underline{44.82} & \underline{52.15} & 16.00 & \underline{82.81} & 34.77 & \underline{41.80} & \underline{41.41}\\
\rowcolor{ourspink}\textbf{STaR-Quant + \method{} (Ours)} & \cellcolor{avgcol}\textbf{45.22} & \textbf{55.08} & 17.00 & 82.23 & 34.57 & \textbf{42.97} & 39.45\\
\midrule
\rowcolor{modelband}\multicolumn{8}{l}{\textit{\textbf{LLaDA-8B-Instruct}}}\\
FP16 & \cellcolor{avgcol}46.32 & 52.93 & 25.00 & 81.84 & 36.52 & 39.26 & 42.38\\
RTN & \cellcolor{avgcol}0.26 & 0.39 & 0.00 & 0.98 & 0.00 & 0.20 & 0.00\\
DLLMQuant & \cellcolor{avgcol}37.35 & 31.84 & 12.00 & 77.73 & 30.66 & 34.96 & 36.91\\
FAIR-Calib & \cellcolor{avgcol}42.27 & 43.75 & 22.00 & 78.12 & \textbf{33.79} & 34.96 & \underline{41.02}\\
STaR-Quant & \cellcolor{avgcol}43.36 & 45.70 & \textbf{25.00} & 78.52 & 32.23 & \underline{38.09} & 40.62\\
Random schedule & \cellcolor{avgcol}37.72 & 30.08 & 23.00 & 75.39 & 31.25 & 27.73 & 38.87\\
\rowcolor{ourspink}\textbf{DLLMQuant + \method{} (Ours)} & \cellcolor{avgcol}40.88 & 42.19 & 15.00 & \textbf{81.84} & 29.88 & \underline{38.09} & 38.28\\
\rowcolor{ourspink}\textbf{FAIR-Calib + \method{} (Ours)} & \cellcolor{avgcol}\textbf{44.76} & \textbf{51.95} & \underline{24.00} & 80.08 & \textbf{33.79} & 36.52 & \textbf{42.19}\\
\rowcolor{ourspink}\textbf{STaR-Quant + \method{} (Ours)} & \cellcolor{avgcol}\underline{43.70} & \underline{48.83} & 21.00 & \underline{80.47} & \underline{32.62} & \textbf{38.28} & \underline{41.02}\\
\bottomrule
\end{tabular}
\end{table*}
\paragraph{Influence-weighted precision benefit.}
Following the divergence--influence proxy of Section~\ref{sec:analysis-signal}, we combine the change in local divergence with downstream influence to estimate the benefit of using precision $b$ instead of $b_0$:
\begin{equation}
\label{eq:predicted-benefit}
g_{\tau,k}(b)=\bar\iota_{\tau,k}
\left[\kappa_{\tau,k}(b_0)-\kappa_{\tau,k}(b)\right].
\end{equation}
In particular, $g_{\tau,k}(b_0)=0$; positive values favor upgrading a window, while negative values quantify the estimated cost of reducing its precision. This generalizes the window-level signal of Section~\ref{sec:analysis-signal} to both precision upgrades and downgrades. The score is a surrogate for the marginal downstream benefit, not the true change in task accuracy. Algorithm~\ref{alg:tq-calibration} (Appendix~\ref{app:algorithms}) describes the complete label-free profiling procedure.

\subsection{Optimal Budget-Constrained Scheduling}
\label{sec:method-allocation}
While Section~\ref{sec:analysis-signal} studies the benefit of upscaling one window at a time, our method must distribute precision jointly across all windows under the same total budget. Given the task-specific scores, we solve
\begin{equation}
\label{eq:allocation}
\begin{aligned}
\mathbf b_\tau^*\in\arg\max_{\mathbf b\in\mathcal B^K}\quad
 &\sum_{k=1}^{K}g_{\tau,k}(b_k)\\
\text{s.t.}\quad &\sum_{k=1}^{K}c_k(b_k)=C_0.
\end{aligned}
\end{equation}
The equality constraint compares the learned schedule with uniform precision at the same modeled budget, and the uniform schedule is always feasible. We use dynamic programming over windows and the cumulative budget. For integer-valued costs (or costs expressed in a common integer unit), let $F_k(z)$ be the largest surrogate benefit attainable by allocating budget $z$ to the first $k$ windows. With $F_0(0)=0$ and $F_0(z)=-\infty$ for $z\neq0$, the recurrence is
\begin{equation}
\label{eq:dp}
F_k(z)=\max_{\substack{b\in\mathcal B\\c_k(b)\le z}}
\left\{F_{k-1}\bigl(z-c_k(b)\bigr)+g_{\tau,k}(b)\right\}.
\end{equation}
Backtracking from $F_K(C_0)$ yields the selected schedule; the search also permits the all-$b_0$ solution when the estimated gain from reallocating precision is insufficient. Algorithm~\ref{alg:tq-dp} (Appendix~\ref{app:algorithms}) gives the complete budgeted solver.

\begin{proposition}[Optimality of temporal allocation]
\label{prop:dp-optimal}
For a finite $\mathcal B$ and nonnegative, integer-valued additive costs $c_k(b)$, the recurrence in Eq.~\eqref{eq:dp} returns a globally optimal solution of Eq.~\eqref{eq:allocation} whenever the budget is feasible.
\end{proposition}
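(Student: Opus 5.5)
We prove the claim by induction on $k$, establishing the invariant that $F_k(z)$ equals the optimal value of the \emph{prefix problem}
\[
P_k(z)=\max\Bigl\{\textstyle\sum_{\ell=1}^{k} g_{\tau,\ell}(b_\ell)\;:\; (b_1,\ldots,b_k)\in\mathcal B^k,\ \sum_{\ell=1}^{k} c_\ell(b_\ell)=z\Bigr\},
\]
with the convention $\max\emptyset=-\infty$. The proposition then follows by taking $k=K$ and $z=C_0$, which is exactly Eq.~\eqref{eq:allocation}. The budget is always feasible, since the uniform schedule $(b_0,\ldots,b_0)$ has cost $C_0$, so $F_K(C_0)>-\infty$. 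Finiteness of $\mathcal B$ ensures every maximum is over a finite set and is attained. Integrality and nonnegativity of the costs ensure that the only relevant budgets are integers in $\{0,\ldots,C_0\}$, so the table is finite and each $F_k(z)$ is well defined.

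The base case $k=0$ is immediate. The empty schedule has cost $0$ and value $0$, so $P_0(0)=0$ and $P_0(z)=-\infty$ for $z\neq 0$, which matches the initialization of $F_0$.

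For the inductive step we use the standard exchange (optimal-substructure) argument, proving two inequalities.

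First, $P_k(z)\le F_k(z)$. Take any feasible prefix $(b_1,\ldots,b_k)$ of cost $z$. Nonnegativity of the costs gives $c_k(b_k)\le z$, so $b=b_k$ is admissible in Eq.~\eqref{eq:dp}. The truncated prefix $(b_1,\ldots,b_{k-1})$ is feasible for budget $z-c_k(b_k)$, so its value is at most $P_{k-1}(z-c_k(b_k))=F_{k-1}(z-c_k(b_k))$ by the inductive hypothesis. Adding $g_{\tau,k}(b_k)$ bounds the prefix's value by the recurrence term for $b=b_k$, hence by $F_k(z)$.

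Second, $F_k(z)\le P_k(z)$. Let $b$ be the maximizer in Eq.~\eqref{eq:dp}. If $F_k(z)=-\infty$ there is nothing to show. Otherwise $F_{k-1}(z-c_k(b))$ is finite, so by induction it is attained by some prefix of cost exactly $z-c_k(b)$. Appending $b$ gives a length-$k$ schedule of cost $z$ whose value is $F_k(z)$.

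Additivity of both the objective and the cost across windows is what makes the split into a prefix and the last window valid. This additivity is the only structural property used; no monotonicity or concavity of $g_{\tau,k}$ is needed, and negative gains are allowed.

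It remains to show that backtracking returns an optimal schedule, not just the optimal value. At each stage, record an argmax $b_k^*$ and move from budget $z$ to $z-c_k(b_k^*)$. By the attainment argument above, each visited state has a finite value realized by the recorded choices. Unrolling from $(K,C_0)$ therefore yields $\mathbf b^*$ with $\sum_k c_k(b_k^*)=C_0$ and $\sum_k g_{\tau,k}(b_k^*)=F_K(C_0)$, which equals the optimum by the invariant. Ties may give several optimal schedules, and any of them is globally optimal.

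The main subtlety is bookkeeping of infeasible states. Every recorded argmax must correspond to a state with finite value; otherwise backtracking could pass through a $-\infty$ entry. Handling this through the $-\infty$ convention and the attainment clause of the induction resolves it. Nonnegativity is used only to guarantee that $c_k(b)\le z$ does not exclude any feasible choice and that budgets stay in the finite range $\{0,\ldots,C_0\}$.
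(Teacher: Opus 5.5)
Your proof is correct and takes essentially the same route as the paper: induction on $k$ with the invariant that $F_k(z)$ equals the best surrogate value over prefix assignments of cost exactly $z$ (with an empty maximum equal to $-\infty$), followed by specializing to $k=K$, $z=C_0$ and backtracking. Your version is more explicit than the paper's. It splits the inductive step into the two inequalities and checks that backtracking never passes through a $-\infty$ state, but the underlying argument is the same.
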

The proof follows from the principle of optimality and is given in Appendix~\ref{app:dp-proof}. The guarantee concerns the calibrated additive surrogate, not the generally non-additive true downstream metric. Calibration produces one schedule per task; at inference, each step selects its assigned precision by a window-index lookup, with no additional diagnostic forward passes.

\begin{figure*}[h]
\centering
\includegraphics[width=\textwidth]{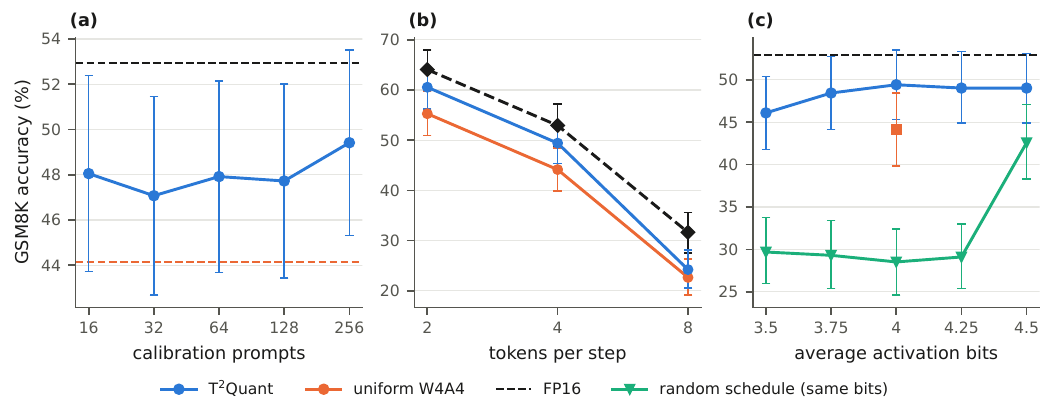}
\caption{Ablations on LLaDA-8B-Instruct, GSM8K ($n=512$; bars are 95\% intervals): (a) calibration prompts (mean over three random
subsets per size), (b) tokens per step with the schedule calibrated at 4, (c) average activation bits, with uniform W4A4 at 4 bits and,
at every budget, a random schedule with the same bits as \method{}.}
\label{fig:ablations}
\end{figure*}

\section{Experiments}
\subsection{Experimental Setup}
\label{sec:setup}

\textbf{Models.} Three open DLMs: Dream-7B-Instruct~\citep{ye-etal-2025-dream}, LLaDA-1.5-8B~\citep{zhu-etal-2026-llada} and
LLaDA-8B-Instruct~\citep{nie-etal-2025-llada}.
\textbf{Benchmarks.} GSM8K~\citep{cobbe-etal-2021-gsm8k}, HumanEval~\citep{chen-etal-2021-humaneval}, ARC-Challenge~\citep{clark-etal-2018-arc}, MMLU-Pro~\citep{wang-etal-2024-mmlupro}, DROP~\citep{dua-etal-2019-drop} and BBH~\citep{suzgun-etal-2023-challenging}, evaluated on held-out prompts that are disjoint from every calibration prompt.
\textbf{Baselines.} Full precision (FP16), round-to-nearest (RTN), and three recent post-training quantization methods for diffusion
LMs, DLLMQuant~\citep{xu-etal-2025-dllmquant}, FAIR-Calib~\citep{huang-etal-2026-faircalib} and STaR-Quant~\citep{yan-etal-2026-starquant}, each
evaluated as is and with \method{} applied on top; a random schedule with the same activation
budget controls for mixed precision itself. All quantized models and every row
uses the same sampler and the same prompts, so all comparisons are paired. Splits, sampler, quantization setup, baseline implementations and
statistical tests are given in Appendix~\ref{app:main_setup}.

\subsection{Main Results}
\label{sec:main_results}

Table~\ref{tab:main_results} reports W4A4 accuracy across three DLMs and six benchmarks, comparing DLLMQuant, FAIR-Calib, and STaR-Quant with and without \method{}. All configurations use 4-bit weights and the same average activation precision of 4.00 bits, ensuring a fair comparison under identical bit budgets.

\textbf{Consistent improvements across methods and models.}
\method{} improves average accuracy for all nine baseline/model combinations, with gains of up to 6.7 points on Dream, 3.4 on LLaDA-1.5, and 3.5 on LLaDA-8B-Instruct. Overall, 42 of 54 benchmark-level comparisons improve, with GSM8K and ARC-Challenge improving in every combination.

\textbf{Best quantized accuracy across all models.}
\method{} achieves the two highest quantized averages on every model, outperforming the strongest standalone baselines by 3.55, 1.13, and 1.40 points on Dream, LLaDA-1.5, and LLaDA-8B-Instruct, respectively. The best configurations approach FP16 accuracy, with gaps of only 2.0, 2.5, and 1.6 points.

\textbf{Substantial recovery from quantization degradation.}
The largest gains occur on Dream, where GSM8K accuracy rises from 35.6 to 53.5 with DLLMQuant and from 46.1 to 58.8 with STaR-Quant. On GSM8K, improvements also reach 9.6 points on LLaDA-1.5 and 10.4 on LLaDA-8B-Instruct. On several benchmarks, \method{} even matches or exceeds FP16 accuracy.

\textbf{Precision allocation matters.}
Random schedules with the same average bit budget perform substantially worse, falling 14--19 points below the original baselines on Dream. In contrast, \method{} achieves the best quantized average on every model, highlighting the importance of allocating precision to the appropriate denoising steps.

\subsection{Ablation Study}
\label{sec:ablations}

Figure~\ref{fig:ablations} evaluates three aspects of \method{} on LLaDA-8B-Instruct and GSM8K under W4A4 (uniform W4A4: 44.1, \method{}: 49.4, FP16: 52.9). \textbf{(a) Calibration size:} Just 16 prompts recover roughly 75\% of the full gain, with stable schedules consistently prioritizing early denoising steps. \textbf{(b) Tokens per step:} The schedule transfers from 4 to 2 tokens per step, maintaining an approximately 5-point improvement, while gains vanish at 8 tokens per step as overall accuracy deteriorates. \textbf{(c) Activation budget:} \method{} achieves 48.4\% accuracy with only 3.75 average bits, outperforming uniform W4A4 (44.1\%) despite using fewer bits. Moreover, \method{} consistently outperforms random schedules at the same bit budget, while increasing precision beyond 4 bits provides negligible additional benefit. These results demonstrate that effective precision allocation matters more than simply increasing the bit budget. Further ablations, including window size, precision levels, the set of activation widths (Table~\ref{tab:levelset_ablation}) additional quantization budgets, and calibration signals, are provided in Appendix~\ref{app:ablations}.

\section{Discussion}
Our findings demonstrate that quantization sensitivity in DLMs depends on both the downstream task and the denoising step, challenging uniform precision allocation. Building on this observation, we introduced \method{}, a task- and time-aware mixed-precision framework that predicts precision importance using unlabeled calibration prompts and allocates activation precision under a fixed bit-width budget. Across three DLMs and six benchmarks, \method{} consistently improves average accuracy when integrated with existing quantization methods, with gains of up to 6.7 percentage points under W4A4. These results highlight the importance of considering both task requirements and temporal generation dynamics when quantizing DLMs.

By improving accuracy under fixed precision budgets, \method{} offers a promising direction for making DLM deployment more computationally efficient and accessible, potentially reducing memory and energy requirements. Its compatibility with existing quantization methods may further facilitate practical adoption. However, quantization can also alter model behavior beyond benchmark accuracy, including safety-related responses~\citep{hong-etal-2024-decoding, egashira-etal-2024-exploiting}. Future work should therefore examine the effects of task- and time-aware precision allocation on safety, robustness, and real-world inference efficiency.

\clearpage
\section*{Limitations}
Although \method{} demonstrates strong results across three DLMs, six benchmarks, and three quantization methods, further evaluation of deployment-related aspects, including hardware efficiency (inference speed and energy consumption) and potential impacts on model safety, remains for future work.

\section*{LLM Usage Statement}
We used LLMs during manuscript preparation mainly as writing assistants, for grammar checking and for improving the clarity and naturalness of the text. All LLM-generated suggestions were manually reviewed and edited by the authors. LLMs did not play any important role in developing the core methods, experiments, analysis, or conclusions of this research.

\section*{Ethical Considerations}
This work aims to improve the efficiency and accessibility of diffusion language models through quantization. While quantization may affect model safety and robustness, investigating these effects is left for future work.

\clearpage
\newpage
\bibliography{references}

\clearpage
\newpage
\appendix

\section*{Appendix Overview}
\label{sec:appendix}

This appendix provides supplementary analyses, additional experimental
results, ablation studies, theoretical proofs, and implementation details
that complement the main paper. It is organized as follows:

\begin{itemize}
    \item \textbf{Section~\ref{app:additional_analysis}} presents
    further analyses and supplementary experimental results.

    \item \textbf{Section~\ref{app:ablations}} provides additional
    ablation studies to further examine the proposed method.

    \item \textbf{Section~\ref{app:proofs}} contains the theoretical
    proofs supporting the claims made in the main paper.

    \item \textbf{Section~\ref{app:algorithms}} provides detailed
    algorithms and pseudocode for the proposed method.

     \item \textbf{Section~\ref{app:experimental_setup}} details the
    experimental setup, implementation details, and evaluation protocols.
\end{itemize}

\section{Further Analyses and Supplementary Results}
\label{app:additional_analysis}

\subsection{Temporal Sensitivity Diagnostic}
\label{app:other_models}

We repeat the label-free diagnostic of Section~\ref{sec:analysis} on two further masked-diffusion LMs: LLaDA-1.5~\citep{zhu-etal-2026-llada} and Dream-v0-Instruct-7B~\citep{ye-etal-2025-dream}.
\begin{itemize}
    \item \textbf{Generation:} 512 tokens, 128 denoising steps in a single block, greedy decoding.
    \item \textbf{Quantization:} the same W4A4 recipe (rotation, GPTQ W4, per-token A4 with clip 0.7) and calibration data.
    \item \textbf{Tasks:} the same subsets, prompts and scorers.
\end{itemize}
Each model is quantized with a single rotation seed.

\begin{figure}[t]
    \centering
    \includegraphics[width=\linewidth]{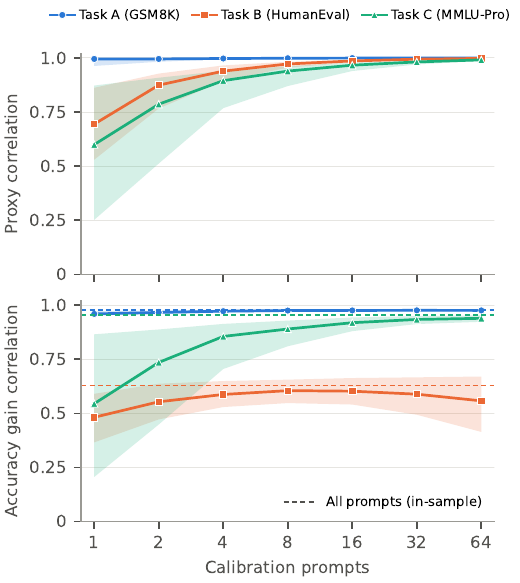}
    \caption{\textbf{A few unlabeled prompts suffice.} Proxy computed from $N$ random calibration prompts. \emph{Top:} correlation with the all-prompt proxy. \emph{Bottom:} correlation with held-out accuracy gains.}
    \label{fig:calib-size}
\end{figure}

\paragraph{LLaDA-1.5.}
LLaDA-1.5 is LLaDA-8B-Instruct further trained with variance-reduced preference optimization~\citep{zhu-etal-2026-llada}. Its architecture, tokenizer, sampler and EOS rules are unchanged. Figure~\ref{fig:kl_llada15} shows the resulting profiles, which closely match those of LLaDA-8B-Instruct (Figure~\ref{fig:temporal-sensitivity}).
The Pearson correlation between the two models' per-window profiles is 0.99 (GSM8K), 0.98 (HumanEval) and 0.92 (MMLU-Pro).
\begin{itemize}
    \item \textbf{GSM8K:} damage peaks in the first window and decays.
    \item \textbf{HumanEval:} damage rises to a broad peak over steps 40--64.
    \item \textbf{MMLU-Pro:} damage forms a broad early plateau, peaks at steps 16--23 and rises again in the last window.
\end{itemize}
LLaDA-1.5 is a fine-tuned LLaDA-8B, so this result shows that the temporal profile survives preference alignment.
It does not test other architectures; Dream addresses that below.

\begin{figure}[t]
    \centering \includegraphics[width=\columnwidth]{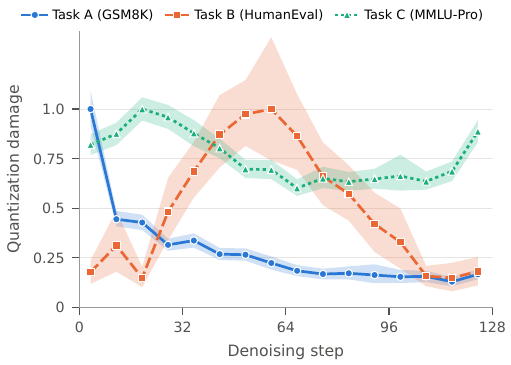}
    \caption{Label-free diagnostic for LLaDA-1.5 under W4A4, computed as in Figure~\ref{fig:temporal-sensitivity}: score-relevant KL per 8-step window, normalized to each task's peak, with 95\% problem-bootstrap bands.}
    \label{fig:kl_llada15}
\end{figure}

\paragraph{Dream-v0-Instruct-7B.}
Dream uses its own sampler~\citep{ye-etal-2025-dream}, which commits the lowest-entropy masked positions first.
Everything else follows the shared setup; only the handling of the end-of-sequence (EOS) token differs.
At the first denoising steps Dream predicts EOS at almost every masked position, so the EOS rule matters more than for LLaDA:
\begin{itemize}
    \item \textbf{No EOS rule (Dream's default):} padding is committed first, and the answer is squeezed into the last few steps.
    \item \textbf{LLaDA's rules:} the few non-EOS positions are forced in first regardless of their confidence.
    \item \textbf{Our choice:} the confidence of positions predicting EOS or end-of-turn is lowered by $\alpha = 1$ nat, on all tasks.
\end{itemize}
We chose $\alpha$, and entropy over top-2-margin ordering, among six variants by full-precision accuracy on the first 20 problems of each task.
The criterion was fixed before any damage profile was computed.

The profiles in Figure~\ref{fig:kl_dream} remain task-dependent, but the tasks peak at different points in denoising than on LLaDA.
\begin{itemize}
    \item \textbf{GSM8K:} Dream answers directly under the shared answer-only few-shot prompt (about six tokens), so the damage concentrates in the first window, with a small rise in the last one.
    \item \textbf{HumanEval:} damage stays low until step 64, then rises to a peak at steps 96--104, late rather than mid-trajectory as on LLaDA.
    \item \textbf{MMLU-Pro:} after a first-window spike (most W4A4 responses state the answer first), damage ramps up steadily to the end.
\end{itemize}

\begin{figure}[t]
    \centering \includegraphics[width=\columnwidth]{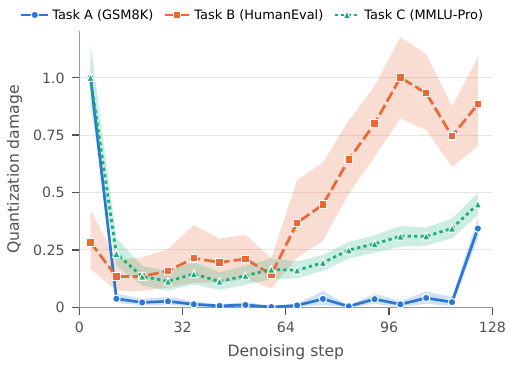}
    \caption{Label-free diagnostic for Dream-v0-Instruct-7B under W4A4, computed as in Figure~\ref{fig:temporal-sensitivity}: score-relevant KL per 8-step window, normalized to each task's peak, with 95\% problem-bootstrap bands.}
    \label{fig:kl_dream}
\end{figure}

\section{Additional Ablation Studies}
\label{app:ablations}

\begin{figure}[t]
\centering
\includegraphics[width=\columnwidth]{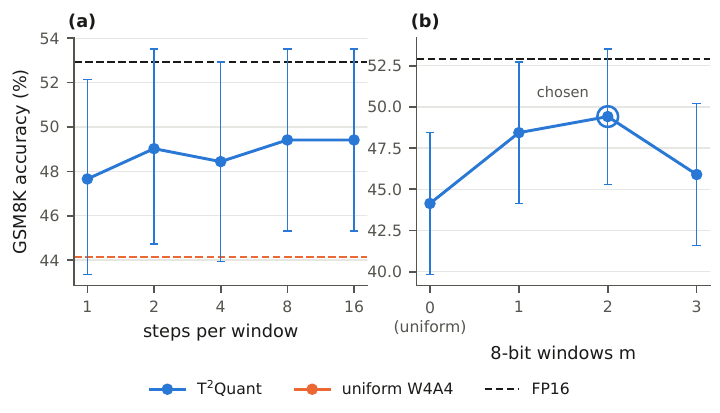}
\caption{Further ablations on LLaDA-8B-Instruct, GSM8K ($n=512$; bars are 95\% intervals): (a) steps per window, (b) number of 8-bit windows
$m$, the circle marking the allocation's own choice. Orange: uniform W4A4; dashed: FP16.}
\label{fig:ablations_more}
\end{figure}

\paragraph{Window size (Figure~\ref{fig:ablations_more}a).}
From 16 steps per window down to a single step the accuracy stays within 1.7 points (47.7 to 49.4), and the coarsest windows are as good as the
finest: the schedule needs few decisions, not many.

\paragraph{Number of 8-bit windows (Figure~\ref{fig:ablations_more}b).}
With three levels and a fixed 4.00-bit average, one 8-bit window is paid for by four 3-bit windows, so every schedule is described by the number
$m$ of 8-bit windows. Forcing $m=1$ or $m=3$ instead of the allocation's own choice $m=2$ costs 1.0 and 3.5 points, so the allocation lands on
the optimum of this one-parameter family. For every $m$ its schedule coincides with a calibration-free prior (8 bits on the first $m$ windows,
3 bits on the last $4m$); on GSM8K, choosing $m$ is where the calibration earns its keep.

\paragraph{Precision set (Table~\ref{tab:levelset_ablation}).}
Allowing all six widths $\{3,4,5,6,7,8\}$ instead of the four of the main table, $\{3,4,6,8\}$, changes the accuracy little: over the 36
composed cells of Table~\ref{tab:levelset_ablation}, the mean rises from 45.7 to 46.3, by 0.6 points on GSM8K, 0.1 on ARC-C and MMLU-Pro,
and 1.4 on HumanEval, whose 100-problem cells account for the largest single changes. A \method{} row is the best quantized result in 11 of the 12 model--benchmark cells. The method therefore does not depend on a particular choice of intermediate widths; we keep $\{3,4,6,8\}$ in the
main experiments because each additional width costs one more replay of the calibration trajectories.

\providecommand{\method}{T$^2$Quant}
\begin{table*}[t]
\centering\small\setlength{\tabcolsep}{5pt}
\caption{Precision-set ablation: W4A4 accuracy (\%) with \method{} allocating over the six activation widths $\{3,4,5,6,7,8\}$ instead of the four of Table~\ref{tab:main_results}, $\{3,4,6,8\}$, at the same 4.00 average activation bits. Avg.\ is over the four benchmarks shown (bold: best, underline: second-best quantized result).}
\label{tab:levelset_ablation}
\begin{tabular}{lccccc}
\toprule
\textbf{Method} & \textbf{Avg.} & \textbf{GSM8K} & \textbf{HumanEval} & \textbf{ARC-C} & \textbf{MMLU-Pro}\\
\midrule
\multicolumn{6}{l}{\textit{Dream-7B-Instruct}}\\
FP16 & 49.62 & 62.50 & 19.00 & 87.30 & 29.69\\
RTN & 0.00 & 0.00 & 0.00 & 0.00 & 0.00\\
DLLMQuant & 37.16 & 35.55 & 9.00 & 79.30 & 24.80\\
FAIR-Calib & 37.81 & 44.92 & 4.00 & 81.64 & 20.70\\
STaR-Quant & 41.57 & 46.09 & 11.00 & 83.01 & \underline{26.17}\\
Random schedule & 21.95 & 15.82 & 4.00 & 54.30 & 13.67\\
DLLMQuant + \method{} & \underline{45.35} & 53.52 & \underline{15.00} & \underline{84.18} & \textbf{28.71}\\
FAIR-Calib + \method{} & 44.61 & \underline{57.23} & 13.00 & 82.81 & 25.39\\
STaR-Quant + \method{} & \textbf{46.74} & \textbf{59.38} & \textbf{18.00} & \textbf{84.77} & 24.80\\
\midrule
\multicolumn{6}{l}{\textit{LLaDA-1.5-8B}}\\
FP16 & 50.74 & 57.03 & 26.00 & 82.62 & 37.30\\
RTN & 0.15 & 0.00 & 0.00 & 0.59 & 0.00\\
DLLMQuant & 40.42 & 35.94 & 15.00 & 78.52 & 32.23\\
FAIR-Calib & 45.27 & 47.07 & 18.00 & 80.86 & \underline{35.16}\\
STaR-Quant & 45.77 & 48.63 & \underline{20.00} & 79.49 & 34.96\\
Random schedule & 39.77 & 31.64 & \underline{20.00} & 76.56 & 30.86\\
DLLMQuant + \method{} & 43.72 & 47.07 & 12.00 & \textbf{83.79} & 32.03\\
FAIR-Calib + \method{} & \underline{48.58} & \underline{52.15} & \textbf{24.00} & \underline{83.01} & \underline{35.16}\\
STaR-Quant + \method{} & \textbf{49.85} & \textbf{57.23} & \textbf{24.00} & 82.23 & \textbf{35.94}\\
\midrule
\multicolumn{6}{l}{\textit{LLaDA-8B-Instruct}}\\
FP16 & 49.07 & 52.93 & 25.00 & 81.84 & 36.52\\
RTN & 0.34 & 0.39 & 0.00 & 0.98 & 0.00\\
DLLMQuant & 38.06 & 31.84 & 12.00 & 77.73 & 30.66\\
FAIR-Calib & 44.41 & 43.75 & 22.00 & 78.12 & \underline{33.79}\\
STaR-Quant & 45.36 & 45.70 & \textbf{25.00} & 78.52 & 32.23\\
Random schedule & 39.93 & 30.08 & \underline{23.00} & 75.39 & 31.25\\
DLLMQuant + \method{} & 43.42 & 43.36 & 18.00 & \textbf{81.84} & 30.47\\
FAIR-Calib + \method{} & \underline{46.12} & \underline{49.61} & 21.00 & \underline{80.66} & 33.20\\
STaR-Quant + \method{} & \textbf{47.95} & \textbf{50.98} & \textbf{25.00} & \underline{80.66} & \textbf{35.16}\\
\bottomrule
\end{tabular}
\end{table*}

\paragraph{Other precisions.}
Table~\ref{tab:abl_regimes} applies \method{} at two other budgets on LLaDA-1.5-8B and Dream-7B-Instruct. At W8A8 (8-bit weights, 8.00 average
activation bits over the widths $\{4,8,16\}$) uniform quantization is already at full precision on every cell, and the schedule is a harmless
no-op: one gain (LLaDA-1.5 GSM8K, +3.5, $p=0.04$), otherwise ties, pooled 132 / 123. At W4A3 (3.00 average bits) uniform quantization
collapses and the random schedule is catastrophic; the schedule recovers most of the loss, by up to 32 points on Dream ARC-C and 8.6 points on
LLaDA-1.5 GSM8K, pooled 371 / 133 with the widths $\{2,3,4\}$. The $\{2,3,8\}$ set is worse at this budget because one 8-bit window costs five
2-bit windows, and on two LLaDA-1.5 cells the allocation then declines to trade at all.

\paragraph{Proxy signal and aggregation (Table~\ref{tab:proxy_ablation}).}
On the window-upscaling experiments of Section~\ref{sec:analysis-signal}, we compare the proxy $S_w=\mathbb{E}[K_w]\,\mathbb{E}[I_w]$ with
three alternatives. The per-example form $\mathbb{E}[\sqrt{K_wI_w}]$ suggested by the proof ranks windows no better on any task and worse on
MMLU-Pro ($r=0.85$ vs.\ $0.96$). Divergence alone fails on HumanEval ($r=-0.08$). Influence alone ranks windows nearly as well as $S_w$, so most
of the ranking comes from $I_w$; the divergence term is what the allocation needs to price each precision level, which influence cannot do.

\begin{table}[h]
\centering\small\setlength{\tabcolsep}{4pt}
\caption{Label-free proxies vs.\ measured window gains (Figure~\ref{fig:proxy-correlation}): Pearson $r$ over the 16 windows.}
\label{tab:proxy_ablation}
\begin{tabular}{lccc}
\toprule
\textbf{Proxy} & \textbf{GSM8K} & \textbf{MMLU-Pro} & \textbf{HumanEval}\\
\midrule
$\mathbb{E}[K_w]\,\mathbb{E}[I_w]$ & 0.98 & 0.96 & 0.63\\
$\mathbb{E}[\sqrt{K_wI_w}]$ & 0.98 & 0.85 & 0.61\\
$\mathbb{E}[K_w]$ only & 0.97 & 0.68 & $-$0.08\\
$\mathbb{E}[I_w]$ only & 0.98 & 0.93 & 0.66\\
\bottomrule
\end{tabular}
\end{table}

\FloatBarrier

\section{Theoretical Proofs}
\label{app:proofs}

\subsection{Proof of Proposition~\ref{prop:bound}}
\label{app:bound-proof}

\paragraph{Notation.}
Fix an evaluation example $i$ and a denoising window $w$,
and let $s_w$ denote the state immediately before the window.
We suppress the superscript $(i)$ when unambiguous.
Let $\Pi_{\mathrm{up}}$ and $\Pi_{\mathrm{tg}}$ denote the
distributions over the complete commitment outcomes within
window $w$, conditioned on the common initial state $s_w$,
when the window is executed at
$\mathrm{A}_{\mathrm{up}}$ and
$\mathrm{A}_{\mathrm{target}}$, respectively.
A commitment outcome includes both the committed token
values and their positions, and therefore determines the
post-window state given $s_w$.

For a commitment outcome $c$, define
\[
g_w(c)
=
\mathbb{E}\left[
m(Y)\mid s_w,c
\right],
\]
where $m(Y)\in[0,1]$ is the final task score and the
expectation is over any randomness in the subsequent
$\mathrm{A}_{\mathrm{target}}$ continuation.
For a distribution $\Pi$ over commitment outcomes, let
\[
V_w(\Pi)
=
\mathbb{E}_{c\sim\Pi}[g_w(c)].
\]
The per-example effect of upscaling window $w$ is
\[
\Delta m_w
=
V_w(\Pi_{\mathrm{up}})
-
V_w(\Pi_{\mathrm{tg}}),
\]
and the average effect over evaluation examples is
\[
\Delta\mathcal{M}_w
=
\mathbb{E}_i[\Delta m_w^{(i)}].
\]

We define two theoretical quantities:
\begin{align*}
J_w
&=
D_{\mathrm{KL}}\left(
\Pi_{\mathrm{up}}
\,\Vert\,
\Pi_{\mathrm{tg}}
\right),\\
R_w
&=
\sup_c g_w(c)-\inf_c g_w(c).
\end{align*}
Here, $J_w$ measures the distributional change in
window commitments induced by upscaling, while $R_w$
measures the sensitivity of the expected downstream
task score to those commitments.

\paragraph{Perturbation--sensitivity bound.}
For stochastic commitment laws with finite $J_w$,
the expected task-score change satisfies
\[
|\Delta\mathcal{M}_w|^2
\le
\frac{1}{2}
\mathbb{E}_i[J_w^{(i)}]\,
\mathbb{E}_i[(R_w^{(i)})^2].
\]

\begin{proof}
Since $g_w(c)$ takes values in an interval of length
$R_w$, the variational characterization of total
variation gives
\[
\left|
V_w(\Pi_{\mathrm{up}})
-
V_w(\Pi_{\mathrm{tg}})
\right|
\le
R_w\,
\mathrm{TV}\left(
\Pi_{\mathrm{up}},
\Pi_{\mathrm{tg}}
\right).
\]
Applying Pinsker's inequality yields
\begin{align*}
|\Delta m_w|
&\le
R_w\sqrt{
\frac{1}{2}
D_{\mathrm{KL}}\left(
\Pi_{\mathrm{up}}
\,\Vert\,
\Pi_{\mathrm{tg}}
\right)
}\\
&=
\sqrt{\frac{1}{2}J_wR_w^2}.
\end{align*}
Averaging over examples and applying the triangle
inequality followed by Cauchy--Schwarz,
\begin{align*}
|\Delta\mathcal{M}_w|
&\le
\mathbb{E}_i[|\Delta m_w^{(i)}|]\\
&\le
\frac{1}{\sqrt{2}}
\mathbb{E}_i\left[
\sqrt{J_w^{(i)}}R_w^{(i)}
\right]\\
&\le
\sqrt{
\frac{1}{2}
\mathbb{E}_i[J_w^{(i)}]\,
\mathbb{E}_i[(R_w^{(i)})^2]
}.
\end{align*}
Squaring both sides establishes the result.
\end{proof}

\paragraph{From $J_w, R_w$ to $K_w, I_w$.}
Neither factor of the bound is available from a single baseline trajectory: $J_w$ compares the window's complete commitment laws, and
$R_w$ requires task scores under alternative commitments. Section~\ref{sec:analysis-signal} therefore uses $K_w$
(Eq.~\eqref{eq:kl}) and $I_w$ (Eq.~\eqref{eq:analysis-influence}) as surrogates. They are not bounds on $J_w$ and $R_w^2$: $K_w$ is
measured along the baseline trajectory rather than between commitment laws, and $I_w$ measures a shift in predictions rather than
task-score sensitivity. The proposition thus motivates $S_w$, but it neither bounds $\Delta\mathcal{M}_w$ by $S_w$ nor guarantees its
ranking, which we evaluate empirically.

\paragraph{Remarks.}
\emph{Deterministic decoding.} Under greedy decoding the commitment laws are point masses, so $J_w$ is infinite whenever the commitments
differ and the bound is vacuous; there the decomposition motivates the proxy without a quantitative guarantee.
\emph{Aggregation.} The product of task-level averages in the bound motivates $S_w$. The per-example alternative
$\mathbb{E}[\sqrt{K_w I_w}]$ can rank windows differently; $S_w$ performs as well or better in our comparisons
(Table~\ref{tab:proxy_ablation}).

\subsection{Proof of Proposition~\ref{prop:dp-optimal}}
\label{app:dp-proof}
For each $k$ and integer budget $z$, let $\mathcal A_k(z)$ denote the set of assignments to windows $1,\dots,k$ whose costs sum to $z$. We prove by induction on $k$ that $F_k(z)=\max_{\mathbf b\in\mathcal A_k(z)}\sum_{j=1}^k g_{\tau,j}(b_j)$, with the maximum of an empty feasible set equal to $-\infty$. For $k=0$, only the zero-cost empty assignment exists, matching the initialization. For $k>0$, every feasible assignment has a unique final choice $b\in\mathcal B$ with $c_k(b)\le z$, and its preceding $k-1$ windows consume exactly $z-c_k(b)$. By the induction hypothesis, the best possible value for that prefix is $F_{k-1}(z-c_k(b))$. Maximizing over all possible final choices gives precisely Eq.~\eqref{eq:dp}, proving the invariant. At $k=K$ and $z=C_0$, the invariant is the objective and constraint of Eq.~\eqref{eq:allocation}. Backtracking recovers an optimal assignment. \hfill $\square$

\begin{table*}[h]
\centering\footnotesize\setlength{\tabcolsep}{5pt}
\caption{Other precisions (LLaDA-1.5-8B and Dream-7B-Instruct; GSM8K / HumanEval / ARC-C, $n=512$ / 100 / 512): accuracy (\%), in parentheses the paired difference to the uniform row ($^{*}$: $p<0.05$); $^{=}$: the allocation returned the uniform schedule.}
\label{tab:abl_regimes}
\begin{tabular}{lrrrrrr}
\toprule
 & \multicolumn{3}{c}{\textbf{LLaDA-1.5-8B}} & \multicolumn{3}{c}{\textbf{Dream-7B-Instruct}}\\
\cmidrule(lr){2-4}\cmidrule(lr){5-7}
\textbf{Row} & GSM8K & HumanEval & ARC-C & GSM8K & HumanEval & ARC-C\\
\midrule
\multicolumn{7}{l}{\textit{W8A8, levels $\{4,8,16\}$, 8.00 bits}}\\
FP16 & 57.0 & 26.0 & 82.6 & 62.5 & 19.0 & 87.3\\
uniform W8A8 & 58.0 & 24.0 & 82.4 & 62.3 & 15.0 & 86.1\\
random schedule & 58.6 (+0.6) & 25.0 (+1.0) & 82.4 (+0.0) & 62.3 (+0.0) & 16.0 (+1.0) & 85.7 (-0.4)\\
\method{} & 61.5 (+3.5$^{*}$) & 24.0$^{=}$ & 83.0 (+0.6) & 60.4 (-2.0) & 14.0 (-1.0) & 85.9 (-0.2)\\
\midrule
\multicolumn{7}{l}{\textit{W4A3, 3.00 bits}}\\
FP16 & 57.0 & 26.0 & 82.6 & 62.5 & 19.0 & 87.3\\
uniform W4A3 & 30.3 & 12.0 & 76.8 & 5.3 & 1.0 & 46.9\\
random schedule & 12.1 (-18.2$^{*}$) & 0.0 (-12.0$^{*}$) & 57.6 (-19.1$^{*}$) & 1.2 (-4.1$^{*}$) & 0.0 (-1.0) & 20.5 (-26.4$^{*}$)\\
\method{}, levels $\{2,3,8\}$ & 30.3$^{=}$ & 12.0$^{=}$ & 81.2 (+4.5$^{*}$) & 4.5 (-0.8) & 1.0 (+0.0) & 78.9 (+32.0$^{*}$)\\
\method{}, levels $\{2,3,4\}$ & 38.9 (+8.6$^{*}$) & 13.0 (+1.0) & 79.5 (+2.7) & 9.6 (+4.3$^{*}$) & 1.0 (+0.0) & 77.5 (+30.7$^{*}$)\\
\bottomrule
\end{tabular}
\end{table*}

\section{Algorithms}
\label{app:algorithms}
Algorithms~\ref{alg:tq-calibration} and~\ref{alg:tq-dp} describe the calibration and allocation stages of \method. Following Section~\ref{sec:analysis-signal}, calibration combines quantization divergence at all committed positions with downstream influence. The candidate precisions and resource costs remain symbolic; their experimental values are given in Appendix~\ref{app:experimental_setup}.

For each calibration prompt $i$, denoising step $t$, and activation precision $b$, define the committed-position divergence
\begin{equation*}
\delta_{i,t}(b)=\sum_{j\in C_{i,t}}
D_{\mathrm{KL}}\!\left(p^{\mathrm{fp}}_{i,t,j}\Vert p^{(b)}_{i,t,j}\right).
\end{equation*}

\begin{algorithm}[t]
\caption{Label-free temporal calibration}
\label{alg:tq-calibration}
\begin{algorithmic}[1]
\Require Unlabeled task prompts $\mathcal D_\tau$; full-precision model $f_{\rm fp}$; calibrated quantized models $\{f_b:b\in\mathcal B\}$; reference precision $b_0$; $K$ temporal windows
\Ensure Task-specific precision scores $\{g_{\tau,k}(b)\}_{k,b}$
\State Initialize $H_{k,b}\gets0$, $Z_k\gets0$, $I_k\gets0$ for all $k,b$
\ForAll{$s_i\in\mathcal D_\tau$}
    \State Generate a reference trajectory with $f_{\rm fp}$
    \State Save states $x_{i,t}$, committed sets $C_{i,t}$, and $p^{\rm fp}_{i,t,j}$ for $j\in C_{i,t}$
    \For{$k=1,\ldots,K$}
        \State $Z_k\gets Z_k+\sum_{t\in\mathcal W_{i,k}}|C_{i,t}|$
        \State Obtain boundary states $x^-_{i,k}$, $x^+_{i,k}$ and remaining masked positions $M_{i,k}$
        \State Compute $\iota_{i,k}$ using Eq.~\eqref{eq:analysis-influence}
        \State $I_k\gets I_k+\iota_{i,k}$
        \ForAll{$t\in\mathcal W_{i,k}$}
            \ForAll{$b\in\mathcal B$}
                \State $p^{(b)}_{i,t,\cdot}\gets f_b(x_{i,t})$
                \State $H_{k,b}\gets H_{k,b}+\delta_{i,t}(b)$
            \EndFor
        \EndFor
    \EndFor
\EndFor
\For{$k=1,\ldots,K$}
    \State $\bar\iota_{\tau,k}\gets I_k/|\mathcal D_\tau|$
    \ForAll{$b\in\mathcal B$}
        \State $\kappa_{\tau,k}(b)\gets H_{k,b}/Z_k$
        \State $g_{\tau,k}(b)\gets\bar\iota_{\tau,k}\bigl[\kappa_{\tau,k}(b_0)-\kappa_{\tau,k}(b)\bigr]$
    \EndFor
\EndFor
\State \Return $\{g_{\tau,k}(b)\}_{k,b}$
\end{algorithmic}
\end{algorithm}

\noindent\textbf{Conventions.}
$\mathcal W_{i,k}$ denotes the steps assigned to normalized window $k$ for prompt $i$, and equals $\mathcal W_k$ when all prompts use the same step count. The reference trajectory and all quantized replays use identical saved input states, as in Section~\ref{sec:method-calibration}. We assume $Z_k>0$ for every window. If no masked positions remain after a window, its downstream influence is zero by Eq.~\eqref{eq:analysis-influence}. No ground-truth labels, task-specific evaluation metrics, or score-relevant position sets are used.

\begin{algorithm}[t]
\caption{Exact budget-constrained precision allocation}
\label{alg:tq-dp}
\begin{algorithmic}[1]
\Require Scores $g_{\tau,k}(b)$; nonnegative integer costs $c_k(b)$; precision set $\mathcal B$; reference precision $b_0$; $K$ windows; budget $C_0=\sum_{k=1}^{K}c_k(b_0)$
\Ensure Optimal schedule $\mathbf b_\tau^*$ for Eq.~\eqref{eq:allocation}
\State Initialize $F_k(z)\gets-\infty$, $P_k(z)\gets\bot$ for $0\le k\le K$ and $0\le z\le C_0$
\State $F_0(0)\gets0$
\For{$k=1,\ldots,K$}
    \For{$z=0,\ldots,C_0$}
        \ForAll{$b\in\mathcal B$ with $c_k(b)\le z$}
            \State $v\gets F_{k-1}(z-c_k(b))+g_{\tau,k}(b)$
            \If{$v>F_k(z)$}
                \State $F_k(z)\gets v$; $P_k(z)\gets b$
            \EndIf
        \EndFor
    \EndFor
\EndFor
\If{$F_K(C_0)\le 0$}
    \State \Return $(b_0,\ldots,b_0)$
\EndIf
\State $z\gets C_0$
\For{$k=K,K-1,\ldots,1$}
    \State $b_k^*\gets P_k(z)$
    \State $z\gets z-c_k(b_k^*)$
\EndFor
\State \Return $(b_1^*,\ldots,b_K^*)$
\end{algorithmic}
\end{algorithm}

The dynamic program requires $O(KC_0|\mathcal B|)$ time and $O(KC_0)$ memory, where $C_0$ is expressed in a common integer cost unit. Its optimality guarantee applies to the additive surrogate of Eq.~\eqref{eq:allocation}, not to downstream task accuracy. Rational costs can be converted into integer costs using a common exact scaling.

\section{Experimental Setup and Implementation Details}
\label{app:experimental_setup}

\subsection{Model, Sampler, and Quantization}
\label{app:shared_setup}
Unless stated otherwise, all experiments use LLaDA-8B-Instruct with 512-token generations over 128 denoising steps in a single
512-token block, so four positions are committed at each step. We use the standard low-confidence remasking sampler~\citep{nie-etal-2025-llada} with
temperature 0 and no classifier-free guidance, so decoding is deterministic. End-of-sequence handling follows the official
LLaDA evaluation settings for each benchmark.

Rotations follow QuaRot~\citep{ashkboos-etal-2024-quarot}: the residual stream is rotated by a random Hadamard matrix, and online Hadamard transforms are applied
before the attention output and MLP down projections. Weights of the transformer linear layers are quantized to 4 bits with
GPTQ~\citep{frantar-etal-2023-gptq} (per output channel, asymmetric, MSE-optimal clipping, act-order), calibrated on 128 WikiText~\citep{merity-etal-2016-wikitext} sequences of 512 tokens with
random masking. The embeddings and language-model head remain in high precision. Activations are quantized dynamically per
token, with a clipping ratio of 0.7, at the inputs to the QKV projection, the attention output projection, the MLP up/gate
projections, and the MLP down projection. Attention scores and softmax remain unquantized. We refer to the configuration with
4-bit weights and 4-bit activations as W4A4. The reference model uses BF16 computation, denoted fp16 throughout for
consistency. The random Hadamard matrix is drawn from a seed, which we refer to as the quantization seed: different seeds give different W4A4
models with the same configuration. The window-upscaling experiments of Section~\ref{sec:analysis-signal} average the measured gains
over five quantization seeds for GSM8K, ten for HumanEval and one for MMLU-Pro; all other experiments use a single seed.

\subsection{Main Results~(Section~\ref{sec:main_results})}
\label{app:main_setup}

\paragraph{Models.}
Dream-7B-Instruct~\citep{ye-etal-2025-dream} (\texttt{Dream-org/Dream-v0-Instruct-7B}), LLaDA-1.5~\citep{zhu-etal-2026-llada}
(\texttt{GSAI-ML/LLaDA-1.5}) and LLaDA-8B-Instruct~\citep{nie-etal-2025-llada} (\texttt{GSAI-ML/LLaDA-8B-Instruct}), with their released
weights and chat templates. All three are masked-diffusion LMs that decode a
fixed-length block by iteratively unmasking tokens; they differ in pre-training data, alignment recipe (LLaDA-1.5 adds preference
optimization) and, for Dream, in architecture and initialization, which makes Dream the most sensitive of the three to activation
quantization.

\paragraph{Benchmarks and splits.}
Table~\ref{tab:benchmarks} lists the benchmarks, the number of held-out problems, the number of calibration prompts, and the output
length. For each benchmark the problems are drawn once from the original evaluation set and split into a calibration pool and a held-out
evaluation pool; the calibration pool is the only data any calibrated method (the baselines and \method{}) sees, and the held-out pool is the
same for every row, so all comparisons in the paper are paired by problem. Accuracy is exact match of the extracted final answer after normalization (GSM8K, ARC-Challenge, MMLU-Pro, BBH; for DROP, exact match under the official DROP answer normalization against any annotated answer span, on the validation split, rather than the usual F1) or
pass@1 of the program on the benchmark's tests (HumanEval).
Averages are unweighted means over benchmarks.

\begin{table}[h]
\centering\small\setlength{\tabcolsep}{4pt}
\caption{Benchmarks: held-out problems, calibration prompts, output tokens and denoising steps (4 tokens per step).}
\label{tab:benchmarks}
\begin{tabular}{lrrrr}
\toprule
\textbf{Benchmark} & \textbf{Eval.} & \textbf{Calib.} & \textbf{Tokens} & \textbf{Steps}\\
\midrule
GSM8K & 512 & 256 & 512 & 128\\
HumanEval & 100 & 50 & 512 & 128\\
ARC-Challenge & 512 & 256 & 128 & 32\\
MMLU-Pro & 512 & 256 & 512 & 128\\
DROP & 512 & 256 & 64 & 16\\
BBH & 512 & 256 & 256 & 64\\
\bottomrule
\end{tabular}
\end{table}

\paragraph{Sampler.}
Every row uses the models' upstream sampler unchanged: 4 tokens are unmasked per step with low-confidence remasking, at temperature 0, with
the output lengths of Table~\ref{tab:benchmarks}. Rows therefore differ only in the quantization applied to the network.

\paragraph{Quantization setup.}
Quantization is simulated (the quantized integer values are used in bf16 arithmetic), the standard protocol for post-training quantization
studies~\citep{ashkboos-etal-2024-quarot, frantar-etal-2023-gptq}. Weights are quantized to 4 bits per channel with GPTQ after a Hadamard rotation of the hidden dimension (QuaRot-style, including
the online rotation of the attention-output input). Activations, the inputs of all linear layers, are quantized dynamically per token; for
widths below 8 bits the per-token range is clipped to 70\% of its maximum. This rotated, GPTQ-quantized W4 model with per-token A4
activations is the shared base of every quantized row except RTN, which uses round-to-nearest 4-bit per-channel weights and per-token A4
activations without rotation or GPTQ. \method{} keeps the base and varies only the activation width along the denoising trajectory: the
trajectory is divided into 16 windows, each window uses one of \{3, 4, 6, 8\} bits, and the allocation is chosen on the calibration pool under
an exact average of 4.00 bits (the method section gives the allocation). The random schedule assigns window widths at random under the same
4.00-bit average; one fixed draw (seed 0) is used for every model and benchmark.

\paragraph{Baselines.}
DLLMQuant~\citep{xu-etal-2025-dllmquant}, FAIR-Calib~\citep{huang-etal-2026-faircalib} and STaR-Quant~\citep{yan-etal-2026-starquant} are
reimplemented on the common rotated W4 base, so that every row shares the same weights, sampler and
prompts and only the activation treatment differs: DLLMQuant's calibration over sampled denoising states, FAIR-Calib's calibration prior, and
STaR-Quant's training-free state-aware channel scaling with its attention correction are each calibrated on denoising states produced by the
full-precision model from the calibration prompts. ``Base + \method{}'' applies our window allocation on top of the baseline's calibration,
which is left unchanged.

\end{document}